\theoremstyle{plain}
\newtheorem{theorem}{Theorem}[section]
\newtheorem{lemma}[theorem]{Lemma}
\theoremstyle{definition}
\theoremstyle{remark}
\newtheorem{remark}[theorem]{Remark}
\def\X{\textbf{X}}
\def\x{\textbf{x}}
\def\M{\textbf{M}}
\def\m{\textbf{m}}
\def\W{\textbf{W}}
\def\w{\textbf{w}}
\def\Z{\textbf{Z}}
\def\z{\textbf{z}}
\def\Y{\textbf{Y}}
\def\y{\textbf{y}}
\def\H{\textbf{H}}
\def\h{\textbf{h}}
\def\B{\textbf{B}}
\def\b{\textbf{b}}
\def\mH{\mathcal{H}}
\def\R{\mathcal{R}}
\def\E{\mathbb{E}}
\def\beq{\begin{equation}}              
\def\eeq{\end{equation}}
\def\beqr{\begin{eqnarray}}             
\def\eeqr{\end{eqnarray}}               
\def\beqrs{\begin{eqnarray*}}           
\def\eeqrs{\end{eqnarray*}}     
\icmltitlerunning{FragmGAN: Generative Adversarial Nets for Fragmentary Data Imputation and Prediction}
\begin{document}

\onecolumn
\title{ \bf FragmGAN: Generative Adversarial Nets for Fragmentary Data\\
Imputation and Prediction}

\author[1]{Fang Fang\thanks{Correspondence to: Fang Fang <ffang@sfs.ecnu.edu.cn>.}}
\author[1]{Shenliao Bao}
\affil[1]{KLATASDS-MOE, School of Statistics, East China Normal University, Shanghai, P.R.China.}

\date{}

\renewcommand*{\Affilfont}{ \normalsize\it} 

\maketitle

\begin{abstract}
Modern scientific research and applications very often encounter ``fragmentary data" which brings big challenges to imputation and prediction. By leveraging the structure of response patterns, we propose a unified and flexible framework based on Generative Adversarial Nets (GAN) to deal with fragmentary data imputation and label prediction at the same time. Unlike most of the other generative model based imputation methods that either have no theoretical guarantee or only consider Missing Completed At Random (MCAR), the proposed FragmGAN has theoretical guarantees for imputation with data Missing At Random (MAR) while no hint mechanism is needed. FragmGAN trains a predictor with the generator and discriminator simultaneously. This linkage mechanism shows significant advantages for predictive performances in extensive experiments.

\end{abstract} 

\section{Introduction}
\label{intro}

Modern scientific research and applications very often encounter data from multiple data sources, and for each data source, various variables can be collected for data analysis. Such increasing data sources bring big opportunities for predicting people's behaviors with huge potential social and commercial benefits. However, these different data sources usually can not be available for every sample, which leads to ``fragmentary data" and brings big challenges to data imputation and label prediction. To be more specific, we introduce two motivating examples that represent the most typically practical scenarios for fragmentary data.

{\bf Inernet Loan}: A leading company of wealth management is exploring its internet loan business and trying to predict the applicants' income for risk management purpose. There are five possibly available data sources (Table \ref{table1}). (i) Card: the credit card information; (ii) Shopping: the shopping history at internet; (iii) Mobile: the monthly bill of mobile phone; (iv) Bureau: the credit report from the Central Bank; (v) Fraud: the information from an anti-fraud platform. However, some applicants are not willing to provide their shopping or mobile information, not all the applicants have credit reports, and many of them are never included in the database of the anti-fraud platform. As a result, there are 10 ``response patterns" in the Internet Loan data as shown in Table \ref{table1}, where ``$\surd$" means the data source is available for the applicants with the corresponding response pattern.

\vskip -0.1in

\begin{table}[H]
\caption{The response patterns of the Internet Loan data.}
\label{table1}
\begin{center}
\begin{small}
\begin{tabular}{cccccc}
		\toprule
		Response& \multicolumn{5}{c}{Data source}   \\
\cline{2-6}
		Pattern   & \mbox{Card} & Shopping & Mobile & Bureau & Fraud  \\
		\hline
		1   & $\surd$   & $\surd$   & $\surd$   & $\surd$   & $\surd$    \\
		2   & $\surd$   & $\surd$   & $\surd$   & $\surd$   &      \\
		3   & $\surd$   & $\surd$   & $\surd$   &     &      \\
		4   & $\surd$   & $\surd$   &     & $\surd$   & $\surd$    \\
		5   & $\surd$   & $\surd$   &     & $\surd$   &     \\
		6   & $\surd$   & $\surd$   &     &     &      \\
		7   & $\surd$   &     & $\surd$   &     &      \\
		8   & $\surd$   &     &     & $\surd$   & $\surd$    \\
		9   & $\surd$   &     &     & $\surd$   &      \\
		10  & $\surd$   &     &     &     &      \\
		\bottomrule
		\end{tabular}%
\end{small}
\end{center}
\end{table}

{\bf ADNI}: The Alzheimers Disease Neuroimaging Initiative {\it http://adni.loni.usc.edu} is a widely used data by researchers for the Alzheimers disease which has four data sources. (i) CSF: cerebrospinal fluid; (ii) PET: positron emission tomography; (iii) MRI: magnetic resonance imaging; (iv) Gene: the gene expression.  As show in Table \ref{table2}, it has 8 different response patterns corresponding to different data availability for each data source.
\begin{table}[H]
\caption{The response patterns of the ADNI data.}
\label{table2}
\begin{center}
\begin{small}
\begin{tabular}{ccccc}
		\toprule
		Response& \multicolumn{4}{c}{Data source}   \\
\cline{2-5}
		Pattern   & \hspace{0.15cm} CSF \hspace{0.15cm}& \hspace{0.15cm}PET \hspace{0.15cm}& \hspace{0.15cm}MRI\hspace{0.15cm} & \hspace{0.15cm}Gene\hspace{0.15cm}  \\
		\hline
		1   & $\surd$   & $\surd$   & $\surd$   & $\surd$      \\
		2   & $\surd$   & $\surd$   & $\surd$   &       \\
		3   & $\surd$   & $\surd$   &    &    $\surd$      \\
		4   &    & $\surd$   &   $\surd$  & $\surd$     \\
		5   &    & $\surd$   &     & $\surd$       \\
		6   &   & $\surd$   & $\surd$     &          \\
		7   &    &     &    &     $\surd$     \\
		8   &   &     &  $\surd$    &       \\
		\bottomrule
			\end{tabular}%
\end{small}
\end{center}
\vskip -0.2in
\end{table}
Such kind of fragmentary data, also known as ``block-wise missing data" in the statistics literature, are very common in the area of risk management, marketing research, social sciences, medical studies and so on. Data imputation and label prediction are two main goals for the analysis of such data. But the extremely high missing rate and complicated missing patterns bring big challenges to the achievement of the goals.

Some work has been done to deal with fragmentary data in both areas of statistics and computer sciences in recent years. From the statistics perspective, methods based on model averaging \cite{fang19}, factor models \cite{zhang20}, generalized methods of moments \cite{xue21}, iterative least squares \cite{lin21} and integrative factor regression \cite{Li21} are proposed. These statistical methods provide useful theoretical properties but exhibit notable shortcomings: (i) They depend on certain statistical models, for example, linear regression models. (ii) They are not flexible in handling mixed data types that include continuous and categorical variables. (iii) Only a couple of methods consider imputation and prediction at the same time.

From the computer science perspective, GAIN \cite{Yoon18} first uses a Generative Adversarial Net (GAN) to impute data Missing Completed At Random (MCAR), which means the missingness occurs entirely at random without depending on any of the variables. MisGAN \cite{misgan} trains a mask generator along with the data generator for imputation. GAMIN \cite{gamin} proposes a generative adversarial multiple imputation network for highly missing data. HexaGAN \cite{hexagan} deals with missing data imputation, conditional generation and semi-supervised learning together. GRAPE \cite{grape} proposes a graph-based framework for data imputation and label prediction. MIWAE \cite{miwae} and Not-MIWAE \cite{notmiwae} propose imputation methods based on variational auto-encoding (VAE) framework instead of GAN. However, these generative methods have various drawbacks. For instance, some of them \cite{gamin, grape, miwae, notmiwae} do not have the theoretical guarantee that the imputed data has the same distribution as the original data. Some of them \cite{Yoon18, misgan, hexagan} only have theoretical results for data MCAR, which is highly unlikely in the practice. Most of them either consider data imputation and label prediction separately or only consider data imputation.

In this paper, by leveraging the structure of response patterns, we propose a ``FragmGAN" for fragmentary data imputation and prediction. The main contributions are:

\begin{itemize}
\item FragmGAN is a unified framework based on GAN to deal with fragmentary data imputation and label prediction at the same time. It's flexible in the sense that (i) It's applicable to both continuous and categorical data and label. (ii) Users can adjust the relative importance of the task of imputation to prediction by an ``adjusting factor".
\item FragmGAN has theoretical guarantees for imputation with data Missing At Random (MAR), which is much more general than MCAR and will be defined in Section \ref{theorem}. Also, the theoretical results do not need a hint mechanism that is required by GAIN.
\item Using similar technical skills, we extend the theoretical results of GAIN to MAR.
\item Other than the generator and discriminator, FragmGAN trains a predictor simultaneously. This linkage mechanism shows significant advantages for predictive performances in extensive experiments.
\end{itemize}

\section{Related Work}

There are lots of discriminative and generative imputation methods that will be considered in our experiments, including Expectation Maximization \cite{EM}, matrix completion \cite{matrix}, MICE \cite{mice}, MissForest \cite{missforest} and Auto-Encoder \cite{AE}.

There are several other GAN based imputation methods. CollaGAN \cite{collagan} proposes a collaborative GAN for missing data imputation but it focuses on image data. WGAIN \cite{wgain}, CGAIN \cite{cgain}, PC-GAIN \cite{pcgain} and S-GAIN \cite{sgain} extend GAIN in various ways. IFGAN \cite{ifgan} conducts missing data imputation using a feature-specific GAN and MCFlow \cite{mcflow} proposes a Monte Carlo flow method for data imputation but no theoretical result is provided. When all the variables are assumed to be categorical, theoretical results of GAN based methods are extended to an uncommon concept of Extended Always Missing At Random \cite{rank}.

Although they are not our main interest, we also mention some other VAE based imputation methods including VAEAC \cite{vaeac}, variational inference of deep subspaces \cite{subspace}, iterative imputation using AE dynamics \cite{dynamics}, VAE using pattern-set mixtures \cite{patternset} and VSAE \cite{vsae}. Some of them only focus on image data. A common disadvantage of VAE based methods is the lack of theoretical guarantee for imputation. Some results of empirical comparison of GAN and VAE based methods are presented \cite{compare}.

\section{GAN-Based Fragmentary Data Imputation}

We first formulate the problem and discuss the method and theory of fragmentary data imputation in this section. The problem of label prediction will be addressed in Section \ref{framework}.

Throughout the paper we usually use bold type letters to denote vectors and use the regular letters for scalars. The upper-case letters are used for random variables and the corresponding lower-case letters are their realizations. Abusing notation slightly, we use a generic notation $p(\cdot)$ or $p(\cdot|\cdot)$ to denote the distribution/probability or conditional distribution/probability for various continuous/categorical variables as long as there is no ambiguity.

\subsection{Imputation Method}

Let $\X=(X_1,\cdots,X_d)$ be the $d$-dimensional data vector of interested variables that could take continuous or categorical values. Note that $d$ is the number of variables but not the number of data sources since each data source may have multiple variables.

Define the mask vector $\M=(M_1,\cdots,M_d)\in\{0,1\}^d$ such that $M_i=1$ means $X_i$ is observed and $M_i=0$ means $X_i$ is missing, $i=1,\cdots,d$. So what we actually observe is $$\tilde\X=\M\odot\X=(M_1X_1,\cdots,M_dX_d),$$ where $\odot$ denotes element-wise multiplication.

Assume overall there are $K$ possible response patterns in the data and define $\W=(W_1,\cdots,W_K)$ as the pattern indicator, where $W_k=1$ if the sample belongs to the $k$th response pattern and $W_k=0$ otherwise, $k=1,\cdots,K$. Note that $\sum_{k=1}^KW_k=1$. In the fragmentary data setting, $\M$ can actually only take $K$ (rather than $2^d$) different values and there is a one-to-one mapping between $\M$ and $\W$. In the two motivating examples, $K=10$ and 8 respectively.

{\bf Generator}

Let $\Z=(Z_1,\cdots,Z_d)$ be a $d$-dimensional noise vector that is independent of all other variables. It is typically taken as Gaussian white noise. We then feed $\tilde\X=\M\odot\X$, $\Z$ and $\W$ into the generator $G$ and obtain
$$\bar\X=G(\M\odot\X,(1-\M)\odot\Z,\W),$$
where $G$ is a function from $\R^d\times\R^d\times\{0,1\}^K$ to $\R^d$. $\bar\X$ is the generated data vector but we are only interested in the missing variables. So the complete data vector after imputation is
$$\hat\X=\M\odot\tilde\X+(1-\M)\odot\bar\X=\M\odot\X+(1-\M)\odot\bar\X.$$
Our target is to make sure the distribution of $\hat\X$ is the same as the distribution of $\X$, i.e., $p(\hat\X)=p(\X)$. The randomness of $Z$ makes our method a random imputation method rather than fixed imputation. Although we focus on single imputation in the paper, but by modeling the distribution of the data, we are able to make multiple imputation to capture the uncertainty for the imputation value \cite{rubin, mice}.

{\bf Discriminator}

The discriminator $D$ tries to figure out which part of $\hat\X$ is from the generator.  The vanilla GAIN \cite{Yoon18} aims to distinguish each component of $\hat\X$ is real (observed) or fake (imputed). It's a hard task since $d$ usually is a large number. Consequently, a hint mechanism, which reveals all but one of the components of $\M$ to $D$, is required for GAIN to solve the model identifiability problem and make sure the generated distribution is what we want.

In the fragmentary data setting, each sample should exactly belong to one of the $K$ response patterns. By leveraging this informative structure, our discriminator $D$ just needs to figure out which pattern $\hat\X$ belongs to. So $D$ is a function from $\R^d$ to $[0,1]^K$ (instead of $[0,1]^d$ in GAIN) such that
$$\hat\W=D(\hat\X)=(\hat W_1,\cdots,\hat W_K)$$
is the predicted probability vector for $\W$, where $\hat W_k$ is the predicted probability that $\hat\X$ is from the $k$th response pattern and $\sum_{k=1}^K\hat W_k=1$.

We train the discriminator $D$ to {\it maximize} the probability of correctly predicting $\W$. On the other hand, the generator $G$ is trained to {\it minimize} the probability of $D$ correctly predicting $\W$. The objective function is defined to be the negative cross-entropy loss
\beq
V(G,D)=\E_{(\hat\X,\W)}\left[\sum_{k=1}^KW_k\log D_k(\hat\X)\right],\label{objective}
\eeq
where $D_k(\hat\X)$ is just $\hat W_k$. Note that the objective function depends on $G$ through $\hat\X$. Then the minimax problem is given by
\beq \min_G\max_D V(G,D).\label{minmax}\eeq

\vspace{0.2cm}
\begin{remark}
The key difference of our imputation method to GAIN is that we use a different objective function by taking the response patterns into consideration. This adjustment makes sure the model is identifiable even no hint mechanism is used as we show in the next subsection.
\end{remark}

\subsection{Theoretical Results}
\label{theorem}

Most previous theoretical results for GAN-based imputation methods including GAIN \cite{Yoon18}, MisGAN \cite{misgan} and HexaGAN \cite{hexagan} are established under the MCAR assumption, which means the missingness occurs entirely at random without depending on any of the variables. This is a very restrictive assumption and rarely satisfied in the real world. In contrast, our theoretical results will be established under the MAR assumption.

Assume $\X$ can be decomposed into $(\X^o,\X^m)$, where $\X^o$ is an always observed subvector of $\X$, and $\X^m$ could be missing. The missing mechanism is characterized \cite{littleandrubin} into three types:
\begin{itemize}
\item Missing Completed At Random (MCAR): $\M$ is independent of $\X$.
\item Missing At Random (MAR): $p(\M|\X)=p(\M|\X^o)$, or equivalently, $\M$ is conditionally independent of $\X^m$ given $\X^o$.
\item Missing Not At Random (MNAR): $p(\M|\X)$ depends on $\X^m$.
\end{itemize}

\begin{remark}
For a random vector $\X$, it could be ambiguous for the definition of MAR. Another way to define MAR is $p(\M|\X)=p(\M|\{X_i \mbox{ s.t. } M_i=1,i=1,\cdots,d\})$. However, since $\M$ appears in both sides of the equation, there is no way to generate a group of independently and identically distributed samples satisfying this equation, unless there exists an always observed subvector $\X^o$ such that $p(\M|\X)=p(\M|\X^o)$. This is the reason why we use the MAR definition as above.
\end{remark}

The complete data vector $\hat\X$ can be decomposed into $(\hat{\X^o},\hat{\X^m})$ correspondingly. Note that $\hat{\X^o}=\X^o$. So
$$p(\hat\X)=p(\hat{\X^o})p(\hat{\X^m}|\hat{\X^o})=p(\X^o)p(\hat{\X^m}|\X^o).$$
To verify that the solution to the minimax problem (\ref{minmax}) satisfies $p(\hat\X)=p(\X)$, we just need to show $p(\hat{\X^m}|\X^o)=p(\X^m|\X^o)$. First we present a lemma. \\

\begin{lemma}
\label{lemma1}
Let $\hat\x$ is a realization of $\hat\X$ such that $p(\hat\x)>0$. For a fixed generator $G$, the $k$th component of the optimal discriminator $D^*(\hat\x)$ to the minimax problem (\ref{minmax}) is given by
$$D^*_k(\hat\x)=p(\W=\w_k^0|\hat\x)$$
for $k=1,\cdots,K$, where $\w_k^0=(0,\cdots,1,\cdots,0)$ is a $K$-dimensional vector with only the $k$th element being 1, and $\W=\w_k^0$ means that the sample belongs to the $k$th response pattern.
\end{lemma}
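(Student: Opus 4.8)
The plan is to follow the classical computation of the optimal discriminator for GAN-type objectives (cf.\ the analysis of GAIN in \cite{Yoon18}), adapted to the $K$-class cross-entropy form~(\ref{objective}). First I would fix $G$, which fixes the joint law of $(\hat\X,\W)$, and rewrite the objective as a single integral with respect to the appropriate dominating measure for the mixed continuous/categorical vector $\hat\X$ (which is exactly what the generic density notation $p(\cdot)$ is meant to absorb) of a pointwise integrand. Because $\W$ is one-hot, on the event $\{\W=\w_k^0\}$ the sum $\sum_{j}W_j\log D_j(\hat\X)$ collapses to $\log D_k(\hat\X)$; taking expectations and writing the joint density of $\hat\X$ and $\W$ as $p(\hat\x,\W=\w_k^0)$ gives
\beq
V(G,D)=\int\Big[\sum_{k=1}^K p(\hat\x,\W=\w_k^0)\log D_k(\hat\x)\Big]\,d\hat\x .
\eeq

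Next, for each fixed realization $\hat\x$ with $p(\hat\x)>0$, I would maximize the bracketed integrand over $(D_1(\hat\x),\dots,D_K(\hat\x))$ subject to the simplex constraint $D_k(\hat\x)\ge 0$ and $\sum_{k=1}^K D_k(\hat\x)=1$, which every admissible discriminator $D:\R^d\to[0,1]^K$ satisfies pointwise. Writing $a_k=p(\hat\x,\W=\w_k^0)$ and $t_k=D_k(\hat\x)$, the subproblem is to maximize $\sum_k a_k\log t_k$ on the simplex. A Lagrange-multiplier argument gives $a_k=\lambda t_k$ for all $k$, hence $t_k\propto a_k$; normalizing and using $\sum_j a_j=\sum_j p(\hat\x,\W=\w_j^0)=p(\hat\x)>0$ yields $t_k=a_k/p(\hat\x)=p(\W=\w_k^0\mid\hat\x)$. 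Equivalently, one can avoid multipliers by writing, with $q_k=a_k/p(\hat\x)$, $\sum_k a_k\log t_k=p(\hat\x)\big[-H(q)-\mathrm{KL}(q\,\|\,t)\big]$, which is maximized precisely when $t=q$ since the entropy term is independent of $t$ and $\mathrm{KL}\ge 0$. Because this maximizer is selected separately for each $\hat\x$ and the choices do not interact, the same pointwise assignment maximizes the integral, so $D^*_k(\hat\x)=p(\W=\w_k^0\mid\hat\x)$, as claimed.

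The routine parts here are the Lagrange computation and the interchange of maximization with integration. The points that need a little care — and where I expect the only, and minor, obstacle — are: (i) restricting to $\hat\x$ with $p(\hat\x)>0$, so that the normalization is legitimate and the conditional probability is well defined, which is exactly the hypothesis of the lemma; (ii) the convention $0\log 0=0$ when some $a_k$ vanishes, which keeps the integrand well behaved and leaves the stated formula a valid maximizer (possibly non-unique on the boundary of the simplex, but that affects neither the value of $V$ nor the subsequent arguments); and (iii) interpreting the ``integral'' uniformly across continuous and categorical coordinates of $\hat\X$.
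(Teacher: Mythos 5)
Your proposal is correct and follows essentially the same route as the paper: both rewrite $V(G,D)$ as an integral of $\sum_k \pi_k\, p(\hat\x\mid\W=\w_k^0)\log D_k(\hat\x)$ (your $a_k$ is exactly this joint density) and maximize pointwise over the simplex to get $D_k^*(\hat\x)=p(\W=\w_k^0\mid\hat\x)$. The only difference is that you justify the simplex-maximization step via Lagrange multipliers and a KL argument, whereas the paper simply cites it as a known fact.
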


\begin{proof}
All proofs are provided in the Appendix \ref{A1}.
\end{proof}

We now rewrite (\ref{objective}) by substituting $D^*$ to obtain the objective function for $G$ to minimize:
\beqrs
C(G)\!\!\!\!&=&\!\!\!\!V(D^*,G)\nonumber\\
\!\!\!\!&=&\!\!\!\!\E_{(\hat\X,\W)}\left[\sum_{k=1}^KW_k\log p(\W=\w_k^0|\hat\X)\right].  \label{CG}
\eeqrs

\begin{theorem}
\label{thm1}
A global minimum for $C(G)$ is achieved if and only if
\beq
p(\hat\x^m|\x^o,\W=\w_k^0)=p(\hat\x^m|\x^o) \label{pp1}
\eeq
for each $k\in\{1,\cdots,K\}$ and $\hat\x=(\x^o,\hat\x^m)$ such that $p(\hat\x)>0$ and $p(\x^o|\W=\w_k^0)>0$.
\end{theorem}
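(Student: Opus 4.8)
The plan is to follow the classical minimax argument for GANs, adapted to our multi-pattern discriminator and, crucially, carried out conditionally on the always-observed block $\X^o$. Lemma~\ref{lemma1} already supplies the optimal discriminator, so the object to analyze is $C(G)=\E_{(\hat\X,\W)}\big[\sum_{k=1}^K W_k\log p(\W=\w_k^0|\hat\X)\big]$. First I would write this expectation as $\sum_{k=1}^K\int p(\hat\x,\W=\w_k^0)\log p(\W=\w_k^0|\hat\x)\,d\hat\x$ and then factor $p(\hat\x,\W=\w_k^0)=p(\x^o)\,p(\W=\w_k^0|\x^o)\,p(\hat\x^m|\x^o,\W=\w_k^0)$, using $\hat{\X^o}=\X^o$. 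Pulling the $\x^o$-integral outside, $C(G)=\E_{\X^o}[\Phi(\x^o)]$ where, for fixed $\x^o$ and with $q_k:=p(\W=\w_k^0|\x^o)$ and $P_k:=p(\cdot\,|\x^o,\W=\w_k^0)$, one has $\Phi(\x^o)=\sum_k q_k\int P_k(\hat\x^m)\log p(\W=\w_k^0|\x^o,\hat\x^m)\,d\hat\x^m$.

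The second step is to identify $\Phi(\x^o)$ as a shifted generalized Jensen--Shannon divergence. By Bayes' rule $p(\W=\w_k^0|\x^o,\hat\x^m)=q_kP_k(\hat\x^m)/\bar P(\hat\x^m)$ with $\bar P:=\sum_j q_jP_j=p(\hat\x^m|\x^o)$, so $\Phi(\x^o)=\sum_k q_k\log q_k+\sum_k q_k\,\mathrm{KL}(P_k\,\|\,\bar P)$. The first sum equals $-H(\W|\X^o=\x^o)$, which is fixed by the data distribution and the missing mechanism and does not involve $G$; after taking $\E_{\X^o}$ it contributes the $G$-free constant $-H(\W|\X^o)$. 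The second sum is a $q$-weighted generalized Jensen--Shannon divergence of $P_1,\dots,P_K$; it is nonnegative, and by nonnegativity of KL it vanishes exactly when $P_k=\bar P$ for every $k$ with $q_k>0$.

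Combining these, $C(G)\ge -H(\W|\X^o)$ with equality if and only if, for every $\x^o$ with $p(\x^o)>0$ and every $k$ with $q_k>0$, $p(\hat\x^m|\x^o,\W=\w_k^0)=\bar P(\hat\x^m)=p(\hat\x^m|\x^o)$. The remaining bookkeeping is to check that the side conditions in the statement, ``$p(\hat\x)>0$'' and ``$p(\x^o|\W=\w_k^0)>0$'', cut out precisely this set of $(\hat\x,k)$: $p(\hat\x)>0$ forces $p(\x^o)>0$, and for such $\x^o$ one has $q_k>0\iff p(\x^o|\W=\w_k^0)>0$ by Bayes' rule. Since $-H(\W|\X^o)$ is independent of $G$, this shows that $G$ attains the global minimum of $C(G)$ if and only if it satisfies~(\ref{pp1}).

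I expect the only real obstacle to be organizational: inserting the conditioning on $\X^o$ in the right place, so that the divergence that appears is conditional (sitting inside an outer $\E_{\X^o}$) rather than unconditional. A computation that skips this step produces the overly strong condition $p(\hat\x|\W=\w_k^0)=p(\hat\x)$, which fails in general because $\X^o$ and $\W$ may be dependent under MAR. A secondary point is the careful treatment of the support/positivity caveats (null sets of $\x^o$, patterns with $q_k=0$, the region where $\bar P=0$) so that the stated ``if and only if'' is exact on the indicated domain; note also that, unlike the results that follow, Theorem~\ref{thm1} itself does not invoke the MAR assumption.
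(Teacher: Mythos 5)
Your proof is correct and follows essentially the same route as the paper's: both substitute the optimal discriminator, apply Bayes' rule to split $C(G)$ into a $G$-independent constant plus a weighted sum of KL divergences $\mathrm{KL}\big(p(\hat\x^m|\x^o,\W=\w_k^0)\,\|\,p(\hat\x^m|\x^o)\big)$, and conclude from nonnegativity of KL. The only differences are presentational — you factor the joint through $p(\x^o)\,p(\W=\w_k^0|\x^o)$ rather than $\pi_k\,p(\x^o|\W=\w_k^0)$, and you name the constant $-H(\W|\X^o)$ and the remainder a generalized Jensen--Shannon divergence — which adds a bit of interpretive clarity but no new substance.
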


It's worthy to mention that Lemma \ref{lemma1} and Theorem \ref{thm1} do not depend on the MAR assumption and they are generally true even under MNAR.

Theorem \ref{thm1} tells us that the optimal generator will generate data so that the conditional distributions of $\hat{\X^m}$ given $\X^o$ across different response patterns are the same. But it does not guarantee $p(\hat{\X^m}|\X^o)=p(\X^m|\X^o)$ yet.

To further explore, we assume the first response pattern is the case that all the variables are observed, i.e., $M_i=1$ for all $i\in\{1,\cdots,d\}$. Note the first response patterns in the two motivating examples are exactly the case. Then given $\W=\w_1^0$, there is no missing variable and we have $\hat{\X^m}=\X^m$. So following (\ref{pp1}), we have
\beq
p(\hat{\X^m}|\X^o)=p(\hat{\X^m}|\X^o,\W=\w_1^0)=p(\X^m|\X^o,\W=\w_1^0).\label{eqn1}
\eeq

Under the MAR assumption, $\M$ is conditionally independent of $\X^m$ given $\X^o$, and so is $\W$ since there is a one-to-one mapping between $\M$ and $\W$. Therefore
\beq
p(\X^m|\X^o,\W=\w_1^0)=p(\X^m|\X^o). \label{eqn2}
\eeq
Combining (\ref{eqn1}) and (\ref{eqn2}) gives us the final theorem that provides theoretical guarantees for our proposed imputation method.

\vspace{0.2cm}
\begin{theorem}
\label{thm2}
Under the MAR assumption, the density solution to (\ref{pp1}) is unique and satisfies
$$p(\hat\x^m|\x^o)=p(\x^m|\x^o).$$
So the distribution of $\hat\X$ is the same as the distribution of $\X$.
\end{theorem}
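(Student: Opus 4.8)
The plan is to combine Theorem~\ref{thm1} with the MAR structure to pin down $p(\hat\x^m|\x^o)$ exactly. Theorem~\ref{thm1} already guarantees that at any global minimizer $G$ the conditional law of the imputed block given $\X^o$ does not depend on the response pattern, i.e.\ $p(\hat\x^m|\x^o,\W=\w_k^0)=p(\hat\x^m|\x^o)$ on the relevant support. So all that remains is to show that this common conditional equals the true $p(\x^m|\x^o)$ and that no other density satisfies (\ref{pp1}).

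First I would single out the ``complete'' response pattern. Under the standing assumption --- satisfied in both motivating examples --- the first pattern has $M_i=1$ for every $i$, so under $\{\W=\w_1^0\}$ there is literally nothing to impute: as random vectors $\hat{\X^m}=\X^m$, since $(1-\M)\odot\bar\X$ vanishes. Taking $k=1$ in the conclusion of Theorem~\ref{thm1} therefore yields
$$p(\hat\x^m|\x^o)=p(\hat\x^m|\x^o,\W=\w_1^0)=p(\x^m|\x^o,\W=\w_1^0),$$
which is (\ref{eqn1}). Because the right-hand side is a fixed quantity not depending on $G$, this already forces uniqueness of the solution of (\ref{pp1}) on the set where $p(\hat\x)>0$ and $p(\x^o|\W=\w_1^0)>0$. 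It remains to identify that quantity with $p(\x^m|\x^o)$, and here I would invoke MAR: since the map between $\M$ and $\W$ is a (measurable) bijection, the conditional independence $p(\M|\X)=p(\M|\X^o)$ passes to $\W$, so $\W$ is conditionally independent of $\X^m$ given $\X^o$; conditioning additionally on $\{\W=\w_1^0\}$ therefore changes nothing, giving $p(\x^m|\x^o,\W=\w_1^0)=p(\x^m|\x^o)$, which is (\ref{eqn2}). Chaining (\ref{eqn1}) and (\ref{eqn2}) gives $p(\hat\x^m|\x^o)=p(\x^m|\x^o)$, and since $\hat{\X^o}=\X^o$ always, $p(\hat\X)=p(\hat{\X^o})\,p(\hat{\X^m}|\hat{\X^o})=p(\X^o)\,p(\X^m|\X^o)=p(\X)$.

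I expect the only real subtlety to be bookkeeping about supports rather than any substantive difficulty. One must confirm that the $k=1$ instance of Theorem~\ref{thm1} is genuinely available, i.e.\ that $p(\x^o|\W=\w_1^0)>0$ on the region of interest, which requires pattern~1 to occur with positive probability and the conditional law of $\X^o$ given pattern~1 to have the appropriate support. One should also be careful to use ``$\hat{\X^m}=\X^m$ under $\{\W=\w_1^0\}$'' as an almost-sure identity of random vectors, not merely an equality in distribution, since it feeds into a conditional-density statement. Beyond these points the argument is precisely the chain of equalities (\ref{eqn1})--(\ref{eqn2}) already displayed in the text, now read as establishing both existence of the claimed solution and its uniqueness.
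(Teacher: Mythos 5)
Your argument is exactly the paper's own proof: the paper proves Theorem~\ref{thm2} precisely by the chain of equalities (\ref{eqn1})--(\ref{eqn2}) displayed between Theorem~\ref{thm1} and Theorem~\ref{thm2}, using the fully observed first pattern to get $p(\hat\x^m|\x^o)=p(\x^m|\x^o,\W=\w_1^0)$ and MAR to drop the conditioning on $\W$. Your additional remarks on uniqueness and support bookkeeping are sensible refinements but do not change the route.
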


Compared to GAIN \cite{Yoon18}, our method do not need a hint mechanism for model identifiability. An intuitive explanation is that we just need to classify each sample into one of the $K$ response patterns. It requires much less model parameters than GAIN, in which $d$ binary classifiers need to be modeled if the hint mechanism is not applied.

Our theoretical results are established under MAR assumption while the vanilla GAIN \cite{Yoon18} assumes MCAR. However, we find that GAIN (with hint) also guarantees that $p(\hat\X)=p(\X)$ under the MAR assumption, which is consistent to a recent theoretical result \cite{rank}. We provide a direct proof of this conclusion of GAIN in the Appendix \ref{A2}.

\section{A Unified Framework for Imputation and Prediction}
\label{framework}
Many previous methods including GAIN \cite{Yoon18} consider label prediction as a post-imputation problem, that is, they first impute the data and then develop a prediction model as if the data were fully observed. The disconnection between imputation and prediction mostly likely damages the accuracy of prediction. In this section we propose a unified framework that considers data imputation and label prediction together. The key idea is to train a predictor $P$ with the generator and discriminator simultaneously.

{\bf Predictor}

Let $\Y$ be the interested $q$-dimensional label that could be continuous or categorical. Unlike the semi-supervised learning, the label $\Y$ is assumed to be available for all the training samples. A predictor $P$ is a function from $\R^d$ to $\R^q$ such that
$\hat\Y=P(\hat\X)$
is a predicted value of $\Y$.

To evaluate the prediction performance of $P$, we define a loss function $L(\Y,P(\hat\X))$ where $L$ is from $\R^d\times\R^d$ to $\R$. The explicit form of $L$ depends on the data type of $\Y$ and is very flexible. For example, if $\Y$ is continuous, we may use $L(\Y,P(\hat\X))=\|\Y-P(\hat\X)\|^2$. If $\Y\in\{0,1\}$ is a binary scalar and the predicted value is the probability of being 1, then we may use $L(\Y,P(\hat\X))=-\Y\log P(\hat\X)-(1-\Y)\log(1-P(\hat\X))$.

To train $G$, $D$ and $P$ together, define the linked objective function as
\beq
U(G,D,P)=\gamma V(G,D)+(1-\gamma)\E_{(\Y,\hat\X)}L(\Y,P(\hat\X)), \label{objective1}
\eeq
where $V(G,D)$ is from (\ref{objective}) and $\gamma\in[0,1]$ is an ``adjusting factor" that controls the relative importance of data imputation to label prediction.

The second part of (\ref{objective1}) does not involve $D$, so the target of $D$ is still to {\it maximize} $V(G,D)$. The first part of (\ref{objective1}) does not involve $P$, so the target of $P$ is to {\it minimize} the predictive loss $\E_{(\Y,\hat\X)}L(\Y,P(\hat\X))$. Both parts of (\ref{objective1}) involves $G$, but fortunately they both require $G$ to {\it minimize}. So the minimax optimization problem is give by
\beq\min_P\min_G\max_D U(G,D,P).\label{minmax1}\eeq

The choice of $\gamma$ is quite flexible. If the user is just interested in data imputation, he can take $\gamma=1$ and $U(G,D,P)$ is reduced to $V(G,D)$. If the user is mainly interested in label prediction, he may use a cross-validation procedure to choose an appropriate $\gamma$ or simply take $\gamma=0.5$ which works quite well as shown in the experiments. Note that $\gamma=0$ is not a good choice since it will lead to overfitting. If the user cares about both imputation and prediction, he may decide $\gamma$ by the relative importance of the two tasks in his mind.

\begin{algorithm}[t]
   \caption{Pseudo Code for FragmGAN}
   \label{alg}
\begin{algorithmic}
   \REPEAT
   \STATE \hspace{-0.22cm}{\bf (1) Discriminator optimization}.
   \STATE Draw $k_D$ samples $\{(\tilde\x(j)),\m(j),\w(j)\}_{j=1}^{k_D}$, draw $k_D$ samples of random noise $\{\z(j)\}_{j=1}^{k_D}$
   \FOR{$j=1$ {\bfseries to} $k_D$}
   \STATE $\bar\x(j)\leftarrow G(\tilde\x(j),(1-\m(j))\odot\z(j),\w(j))$
   \STATE $\hat\x(j)\leftarrow \m(j)\odot\tilde\x(j)+(1-\m(j))\odot\bar\x(j)$
   \STATE Generate hint $\h(j)$
   \ENDFOR
   \STATE Update $D$ using stochastic gradient ascent
   \STATE \hspace{0.5cm}$\nabla_D\sum_{j=1}^{k_D}\sum_{k=1}^Kw_k(j)\log D_k(\hat\x(j),\h(j))$
   \STATE \hspace{-0.22cm}{\bf (2) Generator optimization}.
   \STATE Draw $k_G$ samples $\{(\tilde\x(j)),\m(j),\w(j),\y(j)\}_{j=1}^{k_G}$, draw $k_G$ samples of random noise $\{\z(j)\}_{j=1}^{k_G}$
   \STATE Generate hint $\{\h(j)\}_{j=1}^{k_G}$
   \STATE Update $G$ using SGD ($D$ and $P$ are fixed)
   \STATE \hspace{0.3cm}$\nabla_G\sum_{j=1}^{k_G}\gamma[\sum_{k=1}^Kw_k(j)\log D_k(\hat\x(j),\h(j))+\mathcal{L}_M(\m(j),\tilde\x(j),\bar\x(j))]+(1-\gamma)L(\y(j),P(\hat\x(j)))$
   \STATE \hspace{-0.22cm}{\bf (3) Predictor optimization}.
   \STATE Draw $k_P$ samples $\{(\tilde\x(j)),\m(j),\w(j),\y(j)\}_{j=1}^{k_P}$, draw $k_P$ samples of random noise $\{\z(j)\}_{j=1}^{k_P}$
   \FOR{$j=1$ {\bfseries to} $k_P$}
   \STATE $\bar\x(j)\leftarrow G(\tilde\x(j),(1-\m(j))\odot\z(j),\w(j))$
   \STATE $\hat\x(j)\leftarrow \m(j)\odot\tilde\x(j)+(1-\m(j))\odot\bar\x(j)$
   \ENDFOR
   \STATE Update $P$ using SGD ($G$ is fixed)
   \STATE \hspace{0.5cm}$\nabla_P\sum_{j=1}^{k_P}L(\y(j),P(\hat\x(j)))$
   \UNTIL{training loss has converged}
\end{algorithmic}
\end{algorithm}

The pseudo code to implement (\ref{minmax1}) is given in Algorithm \ref{alg}. Several issues are discussed as follows.

First, although the hint mechanism is not required for our theoretical results, it is still empirically helpful. So we also use the hint mechanism \cite{Yoon18} in implementation. The impact of including the hint mechanism or not will be checked in the experiments.

Second, the generator also generates data even for the observed variables, which can be used to check the generation performance. An extra loss function $\mathcal{L}_M:\{0,1\}^d\times\R^d\times\R^d\rightarrow\R$ defined as $\alpha\sum_{i=1}^dM_iL_M(\tilde X_i,\bar X_i)$ is added to $V(G,D)$ for training $G$, where $L_M:\R\times\R\rightarrow\R$ is a user-specified loss function depending on the variable type of $X_i$. The algorithm result is not sensitive to the choice of hyper-parameter $\alpha$. Actually, as long as $\alpha$ is relatively large ($\alpha=10$ in the experiments), its main effect is to force $\bar X_i=X_i$ for the variable with $M_i=1$.

Third, when $\gamma=1$, Algorithm \ref{alg} actually implements (\ref{minmax}) and the post-imputation prediction.

\begin{table*}[t]
\caption{Imputation performance for UCI datasets in terms of RMSE (Average  $\pm$ Std) of imputation error}
\label{table-imp}
\begin{center}
\begin{small}
\begin{tabular}{cccccc}
		\toprule											
Algorithm	&	Breast	&	Spam	&	Letter	&	Credit	&	News	\\
\toprule	
\multicolumn{6}{c}{MCAR, miss rate=20\%}\\											
\hline											
{\bf FragmGAN}	&	{\bf0.0599 $\pm$ .0021}	&{\bf	0.0537 $\pm$ .0014}	&{\bf	0.1251 $\pm$ .0026}	&{\bf	0.1781 $\pm$ .0057}	&{\bf	0.1484 $\pm$ .0008}	\\
FragmGAN no hint	&	0.0715 $\pm$ .0022	&	0.0545 $\pm$ .0007	&	0.1313 $\pm$ .0078	&	0.1833 $\pm$ .0020	&	0.1595 $\pm$ .0049	\\
\hline	
GAIN	&	0.0658 $\pm$ .0030	&	0.0544 $\pm$ .0005	&	0.1295 $\pm$ .0032	&	0.1814 $\pm$ .0033	&	0.1580 $\pm$ .0063	\\
GAIN no hint	&	0.0736 $\pm$ .0036	&	0.0574 $\pm$ .0004	&	0.1338 $\pm$ .0058	&	0.1899 $\pm$ .0056	&	0.1606 $\pm$ .0051	\\
\hline	
MICE	&	0.0872 $\pm$ .0019	&	0.0715 $\pm$ .0011	&	0.1611 $\pm$ .0045	&	0.1875 $\pm$ .0051	&	0.2152 $\pm$ .0095	\\
MissForest	&	0.0608 $\pm$ .0012	&	0.0594 $\pm$ .0003	&	0.1371 $\pm$ .0012	&	0.2033 $\pm$ .0080	&	0.1932 $\pm$ .0049	\\
Matrix	&	0.1148 $\pm$ .0021	&	0.0562 $\pm$ .0012	&	0.1530 $\pm$ .0035	&	0.2449 $\pm$ .0033	&	0.2291 $\pm$ .0061	\\
AE	&	0.0727 $\pm$ .0011	&	0.0620 $\pm$ .0007	&	0.1361 $\pm$ .0020	&	0.2137 $\pm$ .0026	&	0.1963 $\pm$ .0004	\\
EM	&	0.0754 $\pm$ .0027	&	0.0680 $\pm$ .0002	&	0.1679 $\pm$ .0005	&	0.2312 $\pm$ .0008	&	0.2687 $\pm$ .0003	\\
MisGAN	&	0.0707 $\pm$ .0016	&	0.0582 $\pm$ .0004	&	0.1347 $\pm$ .0020	&	0.1913 $\pm$ .0008	&	0.1746 $\pm$ .0034	\\
\toprule										
\multicolumn{6}{c}{MAR, miss rate=20\%}\\											
\hline											
{\bf FragmGAN}	&	0.0667 $\pm$ .0084	&	{\bf 0.0512 $\pm$ .0007	}&	0.1364 $\pm$ .0072	&	{\bf 0.1844 $\pm$ .0029}	&	{\bf 0.1630 $\pm$ .0043}	\\
FragmGAN no hint	&	0.0730 $\pm$ .0063	&	0.0519 $\pm$ .0007	&	0.1495 $\pm$ .0052	&	0.1966 $\pm$ .0046	&	0.1737 $\pm$ .0060	\\
\hline	
GAIN	&	0.0671 $\pm$ .0092	&	0.0526 $\pm$ .0008	&	0.1457 $\pm$ .0084	&	0.1909 $\pm$ .0040	&	0.1690 $\pm$ .0047	\\
GAIN no hint	&	0.0756 $\pm$ .0099	&	0.0526+¡ª0010	&	0.1505 $\pm$ .0055	&	0.1901 $\pm$ .0040	&	0.1747 $\pm$ .0097	\\
\hline	
MICE	&	0.0931 $\pm$ .0060	&	0.0705 $\pm$ .0008	&	0.1531 $\pm$ .0059	&	0.2487 $\pm$ .0078	&	0.2159 $\pm$ .0096	\\
{\bf MissForest}	&	{\bf 0.0625 $\pm$ .0003}	&	0.0576 $\pm$ .0004	&	{\bf 0.1331 $\pm$ .0059}	&	0.2626 $\pm$ .0025	&	0.1977 $\pm$ .0086	\\
Matrix	&	0.1146 $\pm$ .0031	&	0.0556 $\pm$ .0008	&	0.1488 $\pm$ .0048	&	0.2268 $\pm$ .0056	&	0.2337 $\pm$ .0086	\\
AE	&	0.0843 $\pm$ .0038	&	0.0618 $\pm$ .0017	&	0.1427 $\pm$ .0018	&	0.2110 $\pm$ .0024	&	0.2048 $\pm$ .0008	\\
EM	&	0.0869 $\pm$ .0016	&	0.0678 $\pm$ .0030	&	0.1784 $\pm$ .0050	&	0.2412 $\pm$ .0006	&	0.2659 $\pm$ .0010	\\
MisGAN	&	0.0713 $\pm$ .0014	&	0.0577 $\pm$ .0007	&	0.1428 $\pm$ .0037	&	0.2014 $\pm$ .0046	&	0.1883 $\pm$ .0035	\\
\bottomrule	
	\end{tabular}%
\end{small}
\end{center}
\vskip -0.35in
\end{table*}

\begin{figure*}[t]
\begin{center}
\centerline{\includegraphics[width=13cm]{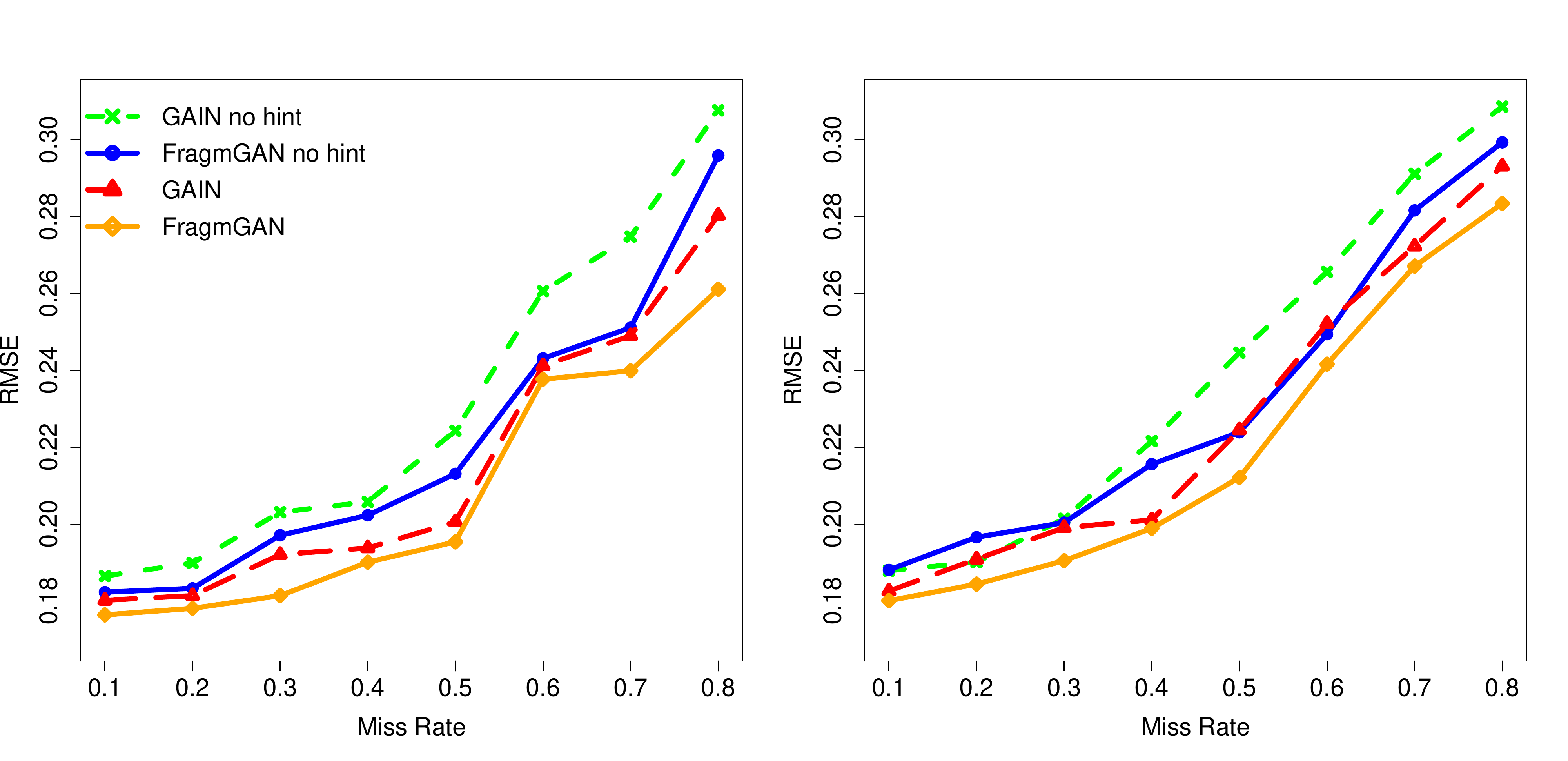}}
\vskip -0.2in
\caption{RMSE of imputation error of the Credit data under different miss rates. Left: MCAR. Right: MAR.}
\label{plot}
\end{center}
\vskip -0.2in
\end{figure*}

\section{Experiments}

In this section we check the imputation and prediction performance of FragmGAN in multiple datasets. First we consider five UCI datasets \cite{UCI} used in GAIN \cite{Yoon18} (Breast, Spam, Letter, Credit and News). Since the original datasets do not have any missing value, we randomly remove part of data by variable groups to make it fragmentary. Unless otherwise stated, the miss rate is 20\%. By designing the removing strategy, we can make it MCAR or MAR. For this group of datasets, we are able to check the performance of data imputation along with label prediction since the true data values are known. Then we consider two datasets Inernet Loan and ADNI for the motivating examples introduced in Section \ref{intro}. The miss rates of them are 46.6\% and 22.3\%, respectively. More details of these two datasets are provided in Appendix \ref{A3} and the data are available in the Supplementary Material. Since the missing values are unknown, we can only check the label prediction performance for these two datasets.

For the purpose of comparison, we consider MICE, MissForest, matrix completion (Matrix), Auto-Encoder (AE), Expectation Maximization (EM) and MisGAN that have been mentioned in Section \ref{intro}. For the prediction task for Inernet Loan and ADNI, we also consider two statistical methods: Model Averaging \cite{fang19}
and FR-FI \cite{zhang20}.

\begin{table*}[t]
\caption{Prediction performance for UCI datasets in terms of AUC (Average  $\pm$ Std)}
\label{table-pre}
\begin{center}
\begin{small}
\begin{tabular}{ccccc}
		\toprule									
Algorithm	&	Breast	&	Spam	&	Credit	&	News	\\
\toprule									
\multicolumn{5}{c}{MCAR, miss rate=20\%}\\									
\hline									
{\bf FragmGAN $\gamma=0.5$}	&{\bf 	0.9932 $\pm$ .0035	}&{\bf 	0.9534 $\pm$ .0029	}&{\bf 	0.7643 $\pm$ .0034	}&	{\bf 0.9709 $\pm$ .0020}	\\
FragmGAN $\gamma=1$	&	0.9920 $\pm$ .0056	&	0.9528 $\pm$ .0030	&	0.7557 $\pm$ .0021	&	0.9620 $\pm$ .0017	\\
\hline
GAIN	&	0.9912 $\pm$ .0055	&	0.9513 $\pm$ .0037	&	0.7521 $\pm$ .0022	&	0.9607 $\pm$ .0026	\\
MICE	&	0.9809 $\pm$ .0045	&	0.9444 $\pm$ .0032	&	0.7492 $\pm$ .0038	&	0.9294 $\pm$ .0021	\\
MissForest	&	0.9892 $\pm$ .0060	&	0.9466 $\pm$ .0069	&	0.7495 $\pm$ .0026	&	0.9409 $\pm$ .0029	\\
Matrix	&	0.9827 $\pm$ .0062	&	0.9021 $\pm$ .0057	&	0.7273 $\pm$ .0074	&	0.8438 $\pm$ .0054	\\
AE	&	0.9850 $\pm$ .0078	&	0.9392 $\pm$ .0053	&	0.7463 $\pm$ .0044	&	0.9211 $\pm$ .0025	\\
EM	&	0.9853 $\pm$ .0029	&	0.9172 $\pm$ .0059	&	0.7418 $\pm$ .0068	&	0.8754 $\pm$ .0028	\\
MisGAN	&	0.9858 $\pm$ .0025	&	0.9485 $\pm$ .0042	&	0.7488 $\pm$ .0017	&	0.9505 $\pm$ .0012	\\
\toprule										
\multicolumn{5}{c}{MAR, miss rate=20\%}\\									
\hline									
{\bf FragmGAN $\gamma=0.5$ }	&{\bf 	0.9936 $\pm$ .0056}	&{\bf  	0.9530 $\pm$ .0034	}&{\bf  	0.7622 $\pm$ .0027}	&{\bf 	0.9696 $\pm$ .0024}	\\
FragmGAN $\gamma=1$	&	0.9928 $\pm$ .0045	&	0.9521 $\pm$ .0029	&	0.7518 $\pm$ .0015	&	0.9598 $\pm$ .0017	\\
\hline	
GAIN	&	0.9914 $\pm$ .0040	&	0.9511 $\pm$ .0032	&	0.7505 $\pm$ .0021	&	0.9592 $\pm$ .0023	\\
MICE	&	0.9878 $\pm$ .0063	&	0.9375 $\pm$ .0036	&	0.7366 $\pm$ .0033	&	0.9325 $\pm$ .0040	\\
MissForest	&	0.9839 $\pm$ .0035	&	0.9519 $\pm$ .0042	&	0.7355 $\pm$ .0026	&	0.9405 $\pm$ .0026	\\
Matrix	&	0.9815 $\pm$ .0083	&	0.9033 $\pm$ .0045	&	0.7342 $\pm$ .0028	&	0.8596 $\pm$ .0036	\\
AE	&	0.9895 $\pm$ .0056	&	0.9347 $\pm$ .0041	&	0.7485 $\pm$ .0056	&	0.9291 $\pm$ .0041	\\
EM	&	0.9892 $\pm$ .0064	&	0.9134 $\pm$ .0036	&	0.7427 $\pm$ .0063	&	0.8828 $\pm$ .0061	\\
MisGAN	&	0.9863 $\pm$ .0023	&	0.9499 $\pm$ .0050	&	0.7483 $\pm$ .0021	&	0.9492 $\pm$ .0020	\\
\bottomrule									
\end{tabular}%
\end{small}
\end{center}
\end{table*}

The hyperparameters of FragmGAN and some implementation details are provided in Appendix \ref{A3}.  More details can be found in the implementation code of FragmGAN that is available in the Supplementary Material.

For each dataset, we randomly split it into a training set (80\%) and a test set (20\%) by response patterns. All the methods are fitted in the training set and then applied to the test set. The imputation and prediction performances are evaluated at the test set. We repeat this experiment 10 times and report the averages and standard deviations of the evaluation criteria (RMSE or AUC). In each table, the best result for each dataset is marked in bold type.

\subsection{Results for the UCI Datasets}

{\bf Imputation Performance}. Table \ref{table-imp} reports the RMSEs of the imputation errors for the UCI datasets. We take $\gamma=1$ for FragmGAN since imputation is the focus here. For both FragmGAN and GAIN, we consider two versions with or without the hint mechanism.

As we can see from Table \ref{table-imp}, FragmGAN outperforms all the other methods in most cases. For the two cases that FragmGAN is not the best (Breast and Letter with MAR, in which MissForest performs the best), it performs the second best. Both FragmGAN and GAIN perform better than their corresponding versions without hint, indicating that the hint mechanism really helps empirically. This is expected since the hint mechanism provides useful information to the discriminator. Note that the results here can not be directly compared to the results in the paper of GAIN \cite{Yoon18} since here we consider fragmentary data with certain response patterns while the missing data in GAIN is generated totally at random.

To check the imputation performance under different miss rates, we take the dataset Credit and generate missing data with miss rate from 10\% to 80\%. Figure \ref{plot} presents the RMSEs of imputation errors under different miss rates. We can see that FragmGAN consistently performs the best. Again, both FragmGAN and GAIN perform better than their corresponding versions without hint. FragmGAN outperforms GAIN in both versions with or without hint.

Overall speaking, FragmGAN performs quite well in data imputation in the sense that it has smaller RMSE of imputation error compared to the competitors.  Specifically it is better than GAIN, indicating that considering the structure of response patterns in the algorithm is really useful.

{\bf Prediction Performance}. Table \ref{table-pre} reports the AUCs for the prediction performance in the datasets Breast, Spam, Credit and News. The dataset Letter is not considered here since it does not have a binary label. We include hint for both FragmGAN and GAIN. The adjusting factor $\gamma$ is taken as 1 or 0.5 for FragmGAN.  Note that when $\gamma=1$, FragmGAN first imputes data and then makes the prediction as if the data were fully observed. When $\gamma=0.5$, the imputation and prediction are considered simultaneously.

As we can see, FragmGAN with $\gamma=0.5$ outperforms the other methods in all the cases. This result shows that the linkage mechanism of training generator and predictor together can improve the prediction performance as we expected. Also note that although FragmGAN with $\gamma=1$ performs worse than FragmGAN with $\gamma=0.5$, it still performs better than all the other methods.

\subsection{Results for the Motivating Examples}

\begin{table*}[t]
\caption{Prediction performance for the two motivation examples (Average  $\pm$ Std)}
\label{table-motivating}
\begin{center}
\begin{small}
\begin{tabular}{cccc}
		\toprule							
Algorithm	&	Internet Loan (RMSE)	&	ADNI (RMSE)	&	ADNI (AUC)	\\
\hline							
{\bf FragmGAN $\gamma_{cv}$	}&	{\bf 0.8865 $\pm$ .0015	}&{\bf 	0.0851 $\pm$ .0019}	&{\bf 	0.7823 $\pm$ .0026 }	\\
FragmGAN $\gamma=1$	&	0.9267 $\pm$ .0036	&	0.0897 $\pm$ .0022	&	0.7701 $\pm$ .0036	\\
FragmGAN $\gamma=0.75$	&	0.9151 $\pm$ .0029	&	0.0883 $\pm$ .0023	&	0.7721 $\pm$ .0028	\\
FragmGAN $\gamma=0.5$	&	0.8928 $\pm$ .0026	&	0.0871 $\pm$ .0020	&	0.7773 $\pm$ .0023	\\
FragmGAN $\gamma=0.25$	&	0.9286 $\pm$ .0044	&	0.0895 $\pm$ .0028	&	0.7719 $\pm$ .0018	\\
\hline	
GAIN	&	0.9246 $\pm$ .0034	&	0.0921 $\pm$ .0014	&	0.7622 $\pm$ .0028	\\
MICE	&	0.9934 $\pm$ .0036	&	0.1034 $\pm$ .0029	&	0.6587 $\pm$ .0029	\\
MissForest	&	0.9982 $\pm$ .0041	&	0.1124 $\pm$ .0019	&	0.7583 $\pm$ .0022	\\
Matrix	&	0.9913 $\pm$ .0039	&	0.1134 $\pm$ .0016	&	0.7343 $\pm$ .0018	\\
AE	&	0.9884 $\pm$ .0035	&	0.0994 $\pm$ .0017	&	0.7400 $\pm$ .0036	\\
EM	&	0.9896 $\pm$ .0034	&	0.1042 $\pm$ .0023	&	0.7020 $\pm$ .0048	\\
MisGAN	&	0.9889 $\pm$ .0021	&	0.0997 $\pm$ .0024	&	0.7384 $\pm$ .0026	\\
Model Averaging	&	0.9831 $\pm$ .0071	&	0.1000 $\pm$ .0095	&	not applicable	\\
FR-FI	&	1.0560 $\pm$ .0112	&	0.1057 $\pm$ .0087	&	not applicable	\\
\bottomrule							
\end{tabular}%
\end{small}
\end{center}
\end{table*}

For the dataset Inernet Loan, the original label is the applicant's income, which is a continuous variable. In the analysis we use $\log(income)$ as the label $Y$. For the dataset ADNI, the original label $Y$ is the score of Mini-Mental State Examination (MMSE) taking value from 0 to 30, in which higher score means better cognitive function. In the real analysis, we consider two labels: (i) The normalized MMSE which can be considered as a continuous variable. (ii) A binary label $Y=1$ if MMSE$\geq$28 and $Y=0$ otherwise.

{\bf Prediction Performance}. Table \ref{table-motivating} reports the RMSEs for the continuous label prediction and AUCs for the binary label prediction. The last two methods (Model Averaging and FR-FI) rely on linear regression models so they are not applicable to the binary label prediction. For the proposed FragmGAN, we take $\gamma=$1, 0.75, 0.5 and 0.25, indicating different relative importance of imputation to prediction. Also, we use a 5-fold cross-validation to select the best (for label prediction) $\gamma$. The CV criterion is defined as the averaged prediction performance in the leave-out samples.

Table \ref{table-motivating} shows that FragmGAN with the $\gamma_{cv}$ selected by cross-validation outperforms all the other methods in all the three cases, indicating that cross-validation is a good way to choose $\gamma$. FragmGAN with $\gamma=0.5$ always performs the second best. Note that $\gamma$ controls the relative importance of data imputation to label prediction. As $\gamma$ decreases from 1 to 0.25, the prediction performances first increase and then decrease. This result confirms two points that we have made: first, the linkage mechanism  of training generator and predictor
together  can improve the prediction performance; second, a small $\gamma$ close to 0 will lead to overfitting and damage the label prediction performance at the test data. Base on the results, we believe $\gamma=0.5$ is a reasonable choice if the users are not willing to apply cross validation to choose the best $\gamma$ due to the computational burden.

\section{Concluding Remarks}

Fragmentary data is becoming more and more popular in many areas and it is not easy to handle. By leveraging the structure in the response patterns, we propose a unified and flexible GAN based framework to deal with data imputation and label prediction simultaneously. An adjusting factor $\gamma$ is used to adjust the relative importance of imputation to prediction. Theoretical guarantees for imputation are provided under the MAR assumption. Extensive experiments confirm the superiority of our proposed FragmGAN. It has wide application prospects especially in personal internet credit investigation, individual patient data (IPD) meta analysis in medical research, and so on.

Based on the theoretical explorations and results of the experiments, we provide several practical suggestions for implementing FragmGAN in the practice:
\begin{itemize}
\item Always use the hint mechanism although it is not required for the theoretical results.
\item Use $\gamma=1$ if there is no label in the analysis or you are only interested in data imputation.
\item If you are interested in label prediction, use cross-validation to select the best $\gamma$ or simply use $\gamma=0.5$. 
\end{itemize}

Possible future work includes: (i) We find that the best performances of data imputation (requires $\gamma=1$) and label prediction (requires a $\gamma$ between 0 and 1) are not achieved at the same time. This is an interesting phenomenon and further investigation may lead to some thought-provoking results. (ii) In this paper we assume that the label is always available in the training data. We my explore FragmGAN under a semi-supervised setting in which some labels are not available. (iii) In this paper we assume that data are MAR. The extension of the results to the more general case of MNAR is a difficult but interesting task.

\vspace{0.4cm}
\bibliography{deeplearning_paper}

\begin{thebibliography}{34}
\providecommand{\natexlab}[1]{#1}
\providecommand{\url}[1]{\texttt{#1}}
\expandafter\ifx\csname urlstyle\endcsname\relax
  \providecommand{\doi}[1]{doi: #1}\else
  \providecommand{\doi}{doi: \begingroup \urlstyle{rm}\Url}\fi

\bibitem[Awan et~al.(2021)Awan, Bennamoun, Sohel, Sanfilippo, and
  Dwivedi]{cgain}
Awan, S.~E., Bennamoun, M., Sohel, F., Sanfilippo, F., and Dwivedi, G.
\newblock Imputation of missing data with class imbalance using conditional
  generative adversarial networks.
\newblock \emph{Neurocomputing}, 453\penalty0 (17):\penalty0 164--171, 2021.

\bibitem[Camino et~al.(2019)Camino, Hammerschmidt, and State]{compare}
Camino, R.~D., Hammerschmidt, C.~A., and State, R.
\newblock Improving missing data imputation with deep generative models.
\newblock \emph{arXiv preprint}, arXiv:1902.10666v1, 2019.

\bibitem[Dalca et~al.(2019)Dalca, Guttag, and Sabuncu]{subspace}
Dalca, A.~V., Guttag, J., and Sabuncu, M.~R.
\newblock Unsupervised data imputation via variational inference of deep
  subspaces.
\newblock \emph{arXiv preprint}, arXiv:1903.03503v1, 2019.

\bibitem[Deng et~al.(2020)Deng, Han, and Matteson]{rank}
Deng, G., Han, C., and Matteson, D.~S.
\newblock Learning to rank with missing data via generative adversarial
  networks.
\newblock \emph{arXiv preprint}, arXiv:2011.02089v2, 2020.

\bibitem[Fang et~al.(2019)Fang, Lan, Tong, and Shao]{fang19}
Fang, F., Lan, W., Tong, J., and Shao, J.
\newblock Model averaging for prediction with fragmentary data.
\newblock \emph{Journal of Business \& Economic Statistics}, 37\penalty0
  (3):\penalty0 517--527, 2019.

\bibitem[Friedjungov\'{a} et~al.(2020)Friedjungov\'{a}, Vasata, Balatsko, and
  Jirina]{wgain}
Friedjungov\'{a}, M., Vasata, D., Balatsko, M., and Jirina, M.
\newblock Missing features reconstruction using a wasserstein generative
  adversarial imputation network.
\newblock In \emph{International Conference on Computational Science (ICCS
  2020)}, pp.\  225--239, 2020.

\bibitem[Garc\'{i}a-Laencina et~al.(2010)Garc\'{i}a-Laencina, Sancho-G\'{o}mez,
  and Figueiras-Vidal]{EM}
Garc\'{i}a-Laencina, P.~J., Sancho-G\'{o}mez, J.-L., and Figueiras-Vidal, A.~R.
\newblock Pattern classification with missing data: a review.
\newblock \emph{Neural Computing and Applications}, 19\penalty0 (2):\penalty0
  263--282, 2010.

\bibitem[Ghalebikesabi et~al.(2021)Ghalebikesabi, Cornish, Holmes, and
  Kelly]{patternset}
Ghalebikesabi, S., Cornish, R., Holmes, C., and Kelly, L.
\newblock Deep generative missingness pattern-set mixture models.
\newblock In \emph{Proceedings of The 24th International Conference on
  Artificial Intelligence and Statistics (AISTATS 2021)}, pp.\  3727--3735,
  2021.

\bibitem[Gondara \& Wang(2017)Gondara and Wang]{AE}
Gondara, L. and Wang, K.
\newblock Multiple imputation using deep denoising autoencoders.
\newblock \emph{arXiv preprint}, arXiv:1705.02737, 2017.

\bibitem[Gong et~al.(2021)Gong, Hajimirsadeghi, He, Durand, and Mori]{vsae}
Gong, Y., Hajimirsadeghi, H., He, J., Durand, T., and Mori, G.
\newblock Variational selective autoencoder: Learning from partially-observed
  heterogeneous data.
\newblock In \emph{Proceedings of The 24th International Conference on
  Artificial Intelligence and Statistics (AISTATS 2021)}, pp.\  2377--2385,
  2021.

\bibitem[Hwang et~al.(2019)Hwang, Jung, and Yoon]{hexagan}
Hwang, U., Jung, D., and Yoon, S.
\newblock Hexa{GAN}: Generative adversarial nets for real world classification.
\newblock In \emph{Proceedings of the 36th International Conference on Machine
  Learning (ICML 2019)}, pp.\  2921--2930, 2019.

\bibitem[Ipsen et~al.(2021)Ipsen, Mattei, and Frellsen]{notmiwae}
Ipsen, N.~B., Mattei, P.-A., and Frellsen, J.
\newblock {NOT-MIWAE}: Deep generative modelling with missing not at random
  data.
\newblock In \emph{International Conference on Learning Representations (ICLR
  2021)}, 2021.

\bibitem[Ivanov et~al.(2019)Ivanov, Figurnov, and Vetrov]{vaeac}
Ivanov, O., Figurnov, M., and Vetrov, D.
\newblock Variational autoencoder with arbitrary conditioning.
\newblock In \emph{International Conference on Learning Representations (ICLR
  2019)}, 2019.

\bibitem[Lee et~al.(2019)Lee, Kim, Moon, and Ye]{collagan}
Lee, D., Kim, J., Moon, W.-J., and Ye, J.~C.
\newblock Colla{GAN}: Collaborative gan for missing image data imputation.
\newblock In \emph{Proceedings of the IEEE/CVF Conference on Computer Vision
  and Pattern Recognition (CVPR 2019)}, pp.\  2487--2496, 2019.

\bibitem[Li \& Li(2021)Li and Li]{Li21}
Li, Q. and Li, L.
\newblock Integrative factor regression and its inference for multimodal data
  analysis.
\newblock \emph{Journal of the American Statistical Association},
  https://doi.org/10.1080/01621459.2021.1914635, 2021.

\bibitem[Li et~al.(2019)Li, Jiang, and Marlin]{misgan}
Li, S. C.-X., Jiang, B., and Marlin, B.
\newblock Mis{GAN}: Learning from incomplete data with generative adversarial
  networks.
\newblock In \emph{International Conference on Learning Representations (ICLR
  2019)}, 2019.

\bibitem[Lichman(2013)]{UCI}
Lichman, M.
\newblock U{CI} machine learning repository.
\newblock http://archive.ics.uci.edu/ml, 2013.

\bibitem[Lin et~al.(2021)Lin, Liu, and Lan]{lin21}
Lin, H., Liu, W., and Lan, W.
\newblock Regression analysis with individual-specific patterns of missing
  covariates.
\newblock \emph{Journal of Business \& Economic Statistics}, 39\penalty0
  (1):\penalty0 179--188, 2021.

\bibitem[Little \& Rubin(2014)Little and Rubin]{littleandrubin}
Little, R.~J. and Rubin, D.~B.
\newblock \emph{Statistical {A}nalysis with {M}issing {D}ata}.
\newblock John Wiley and Sons, 2nd edition, 2014.

\bibitem[Mattei \& Frellsen(2019)Mattei and Frellsen]{miwae}
Mattei, P.-A. and Frellsen, J.
\newblock {MIWAE}: Deep generative modelling and imputation of incomplete data
  sets.
\newblock In \emph{Proceedings of the 36th International Conference on Machine
  Learning (ICML 2019)}, pp.\  4413--4423, 2019.

\bibitem[Mazumder et~al.(2010)Mazumder, Hastie, and Tibshirani]{matrix}
Mazumder, R., Hastie, T., and Tibshirani, R.
\newblock Spectral regularization algorithms for learning large incomplete
  matrices.
\newblock \emph{Journal of Machine Learning Research}, 11\penalty0
  (Aug):\penalty0 2287--2322, 2010.

\bibitem[Neves et~al.(2021)Neves, Naik, and Proenca]{sgain}
Neves, D.~T., Naik, M.~G., and Proenca, A.
\newblock {SGAIN, WSGAIN-CP} and {WSGAIN-GP}: Novel {GAN} methods for missing
  data imputation.
\newblock In \emph{International Conference on Computational Science (ICCS
  2021)}, pp.\  98--113, 2021.

\bibitem[Qiu et~al.(2020)Qiu, Huang, and Li]{ifgan}
Qiu, W., Huang, Y., and Li, Q.
\newblock {IFGAN}: Missing value imputation using feature-specific generative
  adversarial networks.
\newblock In \emph{IEEE International Conference on Big Data (Big Data 2020)},
  pp.\  4715--4723, 2020.

\bibitem[Richardson et~al.(2020)Richardson, Wu, Lin, Xu, and Bernal]{mcflow}
Richardson, T.~W., Wu, W., Lin, L., Xu, B., and Bernal, E.~A.
\newblock {MCF}low: Monte carlo flow models for data imputation.
\newblock In \emph{Proceedings of the IEEE/CVF Conference on Computer Vision
  and Pattern Recognition (CVPR 2020)}, pp.\  14205--14214, 2020.

\bibitem[Rubin(2004)]{rubin}
Rubin, D.~B.
\newblock \emph{Multiple {I}mputation for {N}onresponse in {S}urveys}.
\newblock John Wiley and Sons, 2004.

\bibitem[Smieja et~al.(2020)Smieja, Kolomycki, Struski, Juda, and
  Figueiredo]{dynamics}
Smieja, M., Kolomycki, M., Struski, L., Juda, M., and Figueiredo, M. A.~T.
\newblock Iterative imputation of missing data using auto-encoder dynamics.
\newblock In \emph{International Conference on Neural Information Processing
  (ICONIP 2020)}, 2020.

\bibitem[Stekhoven \& Buhlmann(2011)Stekhoven and Buhlmann]{missforest}
Stekhoven, D.~J. and Buhlmann, P.
\newblock Missforest - nonparametric missing value imputation for mixed-type
  data.
\newblock \emph{Bioinformatics}, 28\penalty0 (1):\penalty0 112--118, 2011.

\bibitem[van Buuren \& Groothuis-Oudshoorn(2011)van Buuren and
  Groothuis-Oudshoorn]{mice}
van Buuren, S. and Groothuis-Oudshoorn, K.
\newblock {MICE}: Multivariate imputation by chained equations in r.
\newblock \emph{Journal of statistical software}, 45\penalty0 (3):\penalty0
  1--67, 2011.

\bibitem[Wang et~al.(2021)Wang, Li, Li, and Yang]{pcgain}
Wang, Y., Li, D., Li, X., and Yang, M.
\newblock {PC-GAIN}: Pseudo-label conditional generative adversarial imputation
  networks for incomplete data.
\newblock \emph{Neural Networks}, 141\penalty0 (Sep):\penalty0 395--403, 2021.

\bibitem[Xue \& Annie(2021)Xue and Annie]{xue21}
Xue, F. and Annie, Q.
\newblock Integrating multi-source block-wise missing data in model selection.
\newblock \emph{Journal of the American Statistical Association}, 116\penalty0
  (536):\penalty0 1914--1927, 2021.

\bibitem[Yoon et~al.(2018)Yoon, Jordon, and van~der Schaar]{Yoon18}
Yoon, J., Jordon, J., and van~der Schaar, M.
\newblock {GAIN}: Missing data imputation using generative adversarial nets.
\newblock In \emph{Proceedings of the 35th International Conference on Machine
  Learning (ICML 2018)}, pp.\  5689--5698, 2018.

\bibitem[Yoon \& Sull(2020)Yoon and Sull]{gamin}
Yoon, S. and Sull, S.
\newblock {GAMIN}: Generative adversarial multiple imputation network for
  highly missing data.
\newblock In \emph{Proceedings of the IEEE/CVF Conference on Computer Vision
  and Pattern Recognition (CVPR 2020)}, pp.\  8456--8464, 2020.

\bibitem[You et~al.(2020)You, Ma, Ding, Kochenderfer, and Leskovec]{grape}
You, J., Ma, X., Ding, D., Kochenderfer, M., and Leskovec, J.
\newblock Handling missing data with graph representation learning.
\newblock In \emph{Proceedings of the 34th Conference on Neural Information
  Processing Systems (NeurIPS 2020)}, 2020.

\bibitem[Zhang et~al.(2020)Zhang, Tang, and Annie]{zhang20}
Zhang, Y., Tang, N., and Annie, Q.
\newblock Imputed factor regression for high-dimensional block-wise missing
  data.
\newblock \emph{Statistica Sinica}, 30\penalty0 (2):\penalty0 631--651, 2020.

\end{thebibliography}
\bibliographystyle{2022}

\newpage
\appendix
\onecolumn
\section{Appendix.}

\subsection{Proofs of the Proposed FragmGAN}
\label{A1}

{\bf 1.1. Proof of Lemma \ref{lemma1}}
\begin{proof}
Let $\pi_k=P(\W=\w_k^0)$.
\beqr
V(G,D)&=&\E_{(\hat\X,\W)}\left[\sum_{k=1}^KW_k\log D_k(\hat\X)\right]\nonumber\\
&=&\sum_{k=1}^KP(\W=\w_k^0)\int_{\hat\x}p(\hat\x|\W=\w_k^0)\log D_k(\hat\x)d\hat\x\nonumber\\
&=&\int_{\hat\x}\left[\sum_{k=1}^K\pi_k p(\hat\x|\W=\w_k^0)\log D_k(\hat\x)\right]d\hat\x. \label{eqn3}
\eeqr
Note that $\sum_{k=1}^K\log D_k(\hat\x)=1$. By the fact that $\sum_{k=1}^Kc_k\log x_k$ with $\sum_{k=1}^Kx_k=1$ achieves its maximum when $x_k=\frac{c_k}{\sum_{k=1}^Kc_k}$, (\ref{eqn3}) is maximized (for fixed $G$) when
$$D_k(\hat\x)=\frac{\pi_kp(\hat\x|\W=\w_k^0)}{\sum_{k=1}^K\pi_kp(\hat\x|\W=\w_k^0)}=p(\W=\w_k^0|\hat\x):=D_k^*(\hat\x)$$
for the $\hat\x$ such that $p(\hat\x)>0$.
\end{proof}

{\bf 1.2. Proof of Theorem \ref{thm1}}
\begin{proof}
\beqr
C(G)&=&V(D^*,G)\nonumber\\
&=&\E_{(\hat\X,\W)}\left[\sum_{k=1}^KW_k\log p(\W=\w_k^0|\hat\X)\right]\nonumber\\
&=&\sum_{k=1}^K\pi_k\int_{\hat\x}p(\hat\x|\W=\w_k^0)\log p(\W=\w_k^0|\hat\x)d\hat\x\nonumber\\
&=&\sum_{k=1}^K\pi_k\int_{\hat\x}p(\hat\x|\W=\w_k^0)\log\frac{p(\hat\x|\W=\w_k^0)\pi_k}{p(\hat\x)}d\hat\x\nonumber\\
&=&\sum_{k=1}^K\pi_k\int_{\hat\x}p(\hat\x|\W=\w_k^0)\log\frac{p(\hat\x|\W=\w_k^0)}{p(\hat\x)}d\hat\x+\sum_{k=1}^K\pi_k\int_{\hat\x}p(\hat\x|\W=\w_k^0)\log\pi_k d\hat\x\nonumber\\\nonumber\\
&\propto&\sum_{k=1}^K\pi_k\int_{\hat\x}p(\hat\x^m|\x^o,\W=\w_k^0)p(\x^o|\W=\w_k^0)\log\frac{p(\hat\x^m|\x^o,\W=\w_k^0)p(\x^o|\W=\w_k^0)}{p(\hat\x^m|\x^o)p(\x^o)}d\hat\x\label{eqn4}\\
&=&\sum_{k=1}^K\pi_k\int_{\x^o}p(\x^o|\W=\w_k^0)\left[\int_{\hat\x^m}p(\hat\x_m|\x^o,\W=\w_k^0)\log\frac{p(\hat\x^m|\x^o,\W=\w_k^0)}{p(\hat\x^m|\x^o)}d\hat\x^m\right]d\x^o\nonumber\\
&&\hspace{1cm}+\sum_{k=1}^K\pi_k\int_{\x^o}p(\x^o|\W=\w_k^0)\left[\int_{\hat\x^m}p(\hat\x_m|\x^o,\W=\w_k^0)\log\frac{p(\x^o|\W=\w_k^0)}{p(\x^o)}d\hat\x^m\right]d\x^o,\label{eqn5}
\eeqr
where ``$\propto$" means equation holds by ignoring terms unrelated to $G$, and (\ref{eqn4}) holds since $\int_{\hat\x}p(\hat\x|\W=\w_k^0)d\hat\x=1$ is a constant and $\hat\x=(\x^o,\hat\x^m)$. Note that $\log\frac{p(\x^o|\W=\w_k^0)}{p(\x^0)}$ is unrelated to $\hat\x^m$ and $\int_{\hat\x^m}p(\hat\x_m|\x^o,\W=\w_k^0)d\hat\x^m=1$, so the second term of (\ref{eqn5}) is unrelated to $G$. Following (\ref{eqn5}), we have
$$C(G)\propto\sum_{k=1}^K\pi_k\int_{\x^o}p(\x^o|\W=\w_k^0)\mbox{KL}\Big(p(\hat\x^m|\x^o,\W=\w_k^0)||p(\hat\x^m|\x^o)\Big)d\x^o,$$
where KL$(\cdot|\cdot)$ denotes the KL divergence. Its minimum is achieved when $p(\hat\x^m|\x^o,\W=\w_k^0)=p(\hat\x^m|\x^o)$ for each $k\in\{1,\cdots,K\}$ and (almost) every $\x$ such that $p(\hat\x)>0$ and $p(\x^o|\W=\w_k^0)>0$.
\end{proof}

{\bf 1.3. Proof of Theorem \ref{thm2}}
\begin{proof}
Actually we have proved this theorem in the statements between Theorem \ref{thm1} and Theorem \ref{thm2} in Section \ref{theorem}. \\
\end{proof}

\vspace{0.2cm}
\subsection{Extend Theoretical Results of GAIN \cite{Yoon18} to Missing at Random}
\label{A2}
We first rewrite the formulation of GAIN under MAR with our notation (just a little bit different from the original GAIN paper).

The original data $\X=(\X^o,\X^m)\in\R^d$, dim$(\X^o)=d^o$, dim$(\X^m)=d^m$ and $d^o+d^m=d$. Denote $\M\in\{0,1\}^{d^m}$ as the response indicator for $\X^m$. We assume $\X^m$ is missing at random, i.e., $p(\M|\X)=p(\M|\X^o)$. Let $\Z=(Z_1,\cdots,Z_{d^m})$ be a $d^m$-dimensional noise vector.  Denote
\beqrs
\bar\X^m&=&G(\X^o,\M\odot\X^m,(1-\M)\odot\Z,\M)\in\R^{d^m},\\
\hat{\X^m}&=&\M\odot\X^m+(1-\M)\odot\bar\X^m\in\R^{d^m},
\eeqrs
and $\hat\X=(\X^o,\hat{\X^m})\in\R^d$ is the complete data after imputation. Let $\H\in\R^{d^m}$ be the hint vector. The discriminator $D$ is a function from $\R^d\times\R^{d^m}$ to $[0,1]^{d^m}$ such that
$$\hat\M=D(\hat\X,\H)=(\hat M_1,\cdots,\hat M_{d_m})$$
is the predicted probability vector for $\M$. The minimax problem is:
\beq \min_G\max_D V(G,D)=\min_G\max_D \E_{(\hat\X,\M,\H)}\left[\M^T\log D(\hat\X,\H)+(1-\M)^T\log(1-D(\hat\X,\H)),\right]  \label{objectiveGAIN}  \eeq
where $\log$ is element-wise logarithm and dependence on $G$ is through $\hat\X$.

The proof of {\bf Lemma 1} in GAIN \cite{Yoon18} does not depend on the decomposition of $\X$. So the result still holds: the optimal $D$ for given $G$ is given by
$$D_i^*(\hat\x,\h)=p(M_i=1|\hat\x,\h)$$
for $i\in\{1,\cdots,d^m\}$.

Denote $\mH_t^i=\{\h: p(\h|m_i=t)>0\}$ for $t\in\{0,1\}$ and $i\in\{1,\cdots,d^m\}$. Substituting $D^*=(D_1^*,\cdots,D_{d^m}^*)$ into $V(G,D)$ in (\ref{objectiveGAIN}), we have the objective function for $G$ (to minimize):
\beqr
C(G)&=&\E_{(\hat\X,\M,\H)}\left[\sum_{i:M_i=1}\log p(m_i=1|\hat\X,\H)+\sum_{i:M_i=0}\log p(m_i=0|\hat\X,\H)\right]\nonumber\\
&=&\int_{\hat\x}\int_{\h}\sum_{i=1}^{d^m}\left[p(\hat\x,\h,m_i=1)\log p(m_i=1|\hat\x,\h)+p(\hat\x,\h,m_i=0)\log p(m_i=0|\hat\x,\h)\right]d\h d\hat\x\nonumber\\
&=&\sum_{i=1}^{d_m}\sum_{t\in\{0,1\}}\int_{\mH_t^i}\int_{\hat\x}p(\hat\x,\h,m_i=t)\log p(m_i=t|\hat\x,\h)d\h d\hat\x\nonumber\\
&=&\sum_{i=1}^{d_m}\sum_{t\in\{0,1\}}\int_{\mH_t^i}\int_{\hat\x}p(\hat\x,\h,m_i=t)\log\frac{p(\hat\x,m_i=t|\h)}{p(\hat\x|\h)} d\h d\hat\x\nonumber
\eeqr
\beqr
&=&\sum_{i=1}^{d_m}\sum_{t\in\{0,1\}}\int_{\mH_t^i}\int_{\hat\x}p(\hat\x,\h,m_i=t)\log\frac{p(\hat\x,m_i=t|\h)p(m_i=t|\h)}{p(\hat\x|\h)p(m_i=t|\h)} d\h d\hat\x\nonumber\\
&=&\sum_{i=1}^{d_m}\sum_{t\in\{0,1\}}\int_{\mH_t^i}\int_{\hat\x}p(\hat\x,\h,m_i=t)\log\frac{p(\hat\x|\h,m_i=t)p(m_i=t|\h)}{p(\hat\x|\h)} d\h d\hat\x\nonumber\\
&=&\sum_{i=1}^{d_m}\sum_{t\in\{0,1\}}\int_{\mH_t^i}\int_{\hat\x}p(\hat\x,\h,m_i=t)\log\frac{p(\hat\x|\h,m_i=t)}{p(\hat\x|\h)} d\h d\hat\x\nonumber\\
&&\hspace{3.5cm}+\sum_{i=1}^{d_m}\sum_{t\in\{0,1\}}\int_{\mH_t^i}\int_{\hat\x}p(\hat\x,\h,m_i=t)\log p(m_i=t|\h) d\h d\hat\x. \label{eqn6}
\eeqr
Note that $\log p(m_i=t|\h)$ and $\int_{\hat\x}p(\hat\x,\h,m_i=t)d\hat\x=p(\h,m_i=t)$ are not related to $\hat\x$. So the second term of (\ref{eqn6}) is not related to $G$ and hence
\beqr
&&C(G)\nonumber\\
&\propto&\sum_{i=1}^{d_m}\sum_{t\in\{0,1\}}\int_{\mH_t^i}\int_{\hat\x}p(\hat\x,\h,m_i=t)\log\frac{p(\hat\x|\h,m_i=t)}{p(\hat\x|\h)} d\h d\hat\x\nonumber\\
&=&\sum_{i=1}^{d_m}\sum_{t\in\{0,1\}}\int_{\mH_t^i}p(\h,m_i\!=\!t)\left[\int_{\hat\x}p(\hat\x|\h,m_i=t)\log\frac{p(\hat\x|\h,m_i=t)}{p(\hat\x|\h)} d\hat\x\right]d\h\nonumber\\
&=&\sum_{i=1}^{d_m}\sum_{t\in\{0,1\}}\int_{\mH_t^i}p(\h,m_i\!=\!t)\left[\int_{\hat\x}p(\hat\x^m|\x^o,\h,m_i\!=\!t)p(\x^o|\h,m_i\!=\!t)\log\frac{p(\hat\x^m|\x^o,\h,m_i\!=\!t)p(\x^o|\h,m_i\!=\!t)}{p(\hat\x^m|\x^o,\h)p(\x^o|\h)} d\hat\x\right]d\h\nonumber\\
&=&\sum_{i=1}^{d_m}\sum_{t\in\{0,1\}}\int_{\mH_t^i}p(\h,m_i\!=\!t)\int_{\x^o}p(\x^o|\h,m_i\!=\!t)\left[\int_{\hat\x^m}p(\hat\x^m|\x^o,\h,m_i\!=\!t)\log\frac{p(\hat\x^m|\x^o,\h,m_i\!=\!t)}{p(\hat\x^m|\x^o,\h)} d\hat\x^m\right]d\x^od\h\nonumber\\
&&\vspace{0.2cm}+\sum_{i=1}^{d_m}\sum_{t\in\{0,1\}}\int_{\mH_t^i}p(\h,m_i\!=\!t)\int_{\x^o}p(\x^o|\h,m_i\!=\!t)\left[\int_{\hat\x^m}p(\hat\x^m|\x^o,\h,m_i\!=\!t)\log\frac{p(\x^o|\h,m_i\!=\!t)}{p(\x^o|\h)} d\hat\x^m\right]d\x^od\h\nonumber\\
&\propto&\sum_{i=1}^{d_m}\sum_{t\in\{0,1\}}\int_{\mH_t^i}p(\h,m_i\!=\!t)\int_{\x^o}p(\x^o|\h,m_i\!=\!t)\left[\int_{\hat\x^m}p(\hat\x^m|\x^o,\h,m_i\!=\!t)\log\frac{p(\hat\x^m|\x^o,\h,m_i\!=\!t)}{p(\hat\x^m|\x^o,\h)} d\hat\x^m\right]d\x^od\h\nonumber\\
&=&\sum_{i=1}^{d_m}\sum_{t\in\{0,1\}}\int_{\mH_t^i}p(\h,M_i\!=\!t)\int_{\x^o}p(\x^o|\h,m_i\!=\!t)\mbox{KL}\Big(p(\hat\x^m|\x^o,\h,m_i=t)||p(\hat\x^m|\x^o,\h)\Big)d\x^od\h,\nonumber
\eeqr
which achieves its minimum when
\beq
p(\hat\x^m|\x^o,\h,m_i=t)=p(\hat\x^m|\x^o,\h)\label{eqn7}
\eeq
for $t\in\{0,1\}$ and $i\in\{1,\cdots,d^m\}$.

Let $\B=(B_1,\cdots,B_{d^m})\in\{0,1\}^{d^m}$ be a random vector taking value $\b_i^0$ with probability $\frac{1}{d^m}$, where $\b_i^0=(1,\cdots,1,0,1,\cdots,1)$ is a $d_m$-dimensional vector with only the $i$th element being 0, $i=1,\cdots,d^m$. The hint vector $\H=\B\odot\M+0.5(1-\B)$. Note that $H_i=t$ means $M_i=t$ for $t\in\{0,1\}$ and $H_i=0.5$ implies nothing about $M_i$. With this $\H$, $D_i^*(\hat\x,\h)=t$ for $\h$ such that $\h_i=t$ and $t\in\{0,1\}$.

For any $\m=(m_1,\cdots,m_{d^m})\in\{0,1\}^{d_m}$ and $i\in\{1,\cdots,d^m\}$, let $\m_0$, $\m_1\in\{0,1\}^{d_m}$ be any two vectors such that they are the same as $\m$ on the $j$th element for $j\neq i$, and the $i$th components of $\m_0$ and $\m_1$  are 0 and 1, respectively. So $\m=\m_0$ if $m_i=0$ and $\m=\m_1$ if $m_i=1$. Define a realization of the hint vector $\H$ as $\h$ such that $h_j=m_j$ if $j\neq i$ and $h_j=0.5$ if $j=i$. Since $p(\h|m_i=t)>0$, by (\ref{eqn7}) we have
\beq p(\hat\x^m|\x^o,\h,m_i=0)=p(\hat\x^m|\x^o,\h,m_i=1).\label{eqn8}\eeq
Note
\beqr
p(\hat\x^m|\x^o,\h,m_i=t)=p(\hat\x^m|\x^o,\B=\b_i^0,\m=\m_t)=p(\hat\x^m|\x^o,m=\m_t),\label{eqn9}
\eeqr
where the first equation holds since $\{\h,m_i=t\}$ is equivalent to $\{\B=\b_i^0,\m=\m_t\}$, and the second equality holds due to the independence of $\B$ to the other variables. Combing (\ref{eqn8}) and (\ref{eqn9}), we have $p(\hat\x^m|\x^o,\m_0)=p(\hat\x^m|\x^o,\m_1)$.

Let $\textbf{1}=(1,\cdots,1)$ and $\m$ be any vector in $\{0,1\}^{d_m}$. There exists a sequence of vectors $\m_1',\cdots,\m_L'$ such that $\m_l'$ and $\m_{l+1}'$ only differs on one component and $\m_1'=\m$ and $\m_L'=\textbf{1}$. By the arguments above, we have
$$p(\hat\x^m|\x^o,\m)=p(\hat\x^m|\x^o,\m_1')=\cdots=p(\hat\x^m|\x^o,\m_L')=p(\hat\x^m|\x^o,\textbf{1}).$$
Note that $p(\hat\x^m|\x^o,\textbf{1})=p(\x^m|\x^o,\textbf{1})$. And by the MAR assumption we have $p(\x^m|\x^o,\textbf{1})=p(\x^m|\x^o)$. So
$$p(\hat\x^m|\x^o)=\sum_{\m\in\{0,1\}^{d_m}}p(\M=\m)p(\hat\x^m|\x^o,\m)=\sum_{\m\in\{0,1\}^{d_m}}p(\M=\m)p(\x^m|\x^o)=p(\x^m|\x^o),$$
which implies $p(\hat\X)=p(\X)$ as we need.

\vspace{0.4cm}
\subsection{Dataset Details and Hyper-parameters}
\label{A3}
The details of the five UCI datasets can be found in UCI machine learning repository \cite{UCI}  and the paper of GAIN \cite{Yoon18}.

The datasets Internet Loan and ADNI are available in the Supplementary Materials. And the number of variables at each data source and sample size of each response pattern are presented in below tables.

\begin{table}[h]
\caption{The number of variables at each data source and sample size of each response pattern for the Inernet Loan data.}
\label{table10}
\begin{center}
\begin{small}
\begin{tabular}{ccccccc}
		\toprule
		Response& \multicolumn{5}{c}{Data source}  & Sample  \\
\cline{2-6}
		Pattern   & \mbox{Card} (5) & Shopping (4) & Mobile (5) & Bureau (7) & Fraud (4) &  Size \\
		\hline
		1   & $\surd$   & $\surd$   & $\surd$   & $\surd$   & $\surd$  & 115 \\
		2   & $\surd$   & $\surd$   & $\surd$   & $\surd$   &   & 29  \\
		3   & $\surd$   & $\surd$   & $\surd$   &     &    & 220 \\
		4   & $\surd$   & $\surd$   &     & $\surd$   & $\surd$  & 232 \\
		5   & $\surd$   & $\surd$   &     & $\surd$   &   &  113 \\
		6   & $\surd$   & $\surd$   &     &     & &   222  \\
		7   & $\surd$   &     & $\surd$   &     &   &  11 \\
		8   & $\surd$   &     &     & $\surd$   & $\surd$  & 38  \\
		9   & $\surd$   &     &     & $\surd$   &   & 102  \\
		10  & $\surd$   &     &     &     &   &  302 \\
\hline
\multicolumn{6}{r}{Total} & 1384 \\
		\bottomrule
		\end{tabular}%
\end{small}
\end{center}
\vskip -0.25in
\end{table}

\begin{table}[h]
\caption{The number of variables at each data source and sample size of each response pattern for the ADNI data.}
\label{table11}
\begin{center}
\begin{small}
\begin{tabular}{cccccc}
		\toprule
		Response& \multicolumn{4}{c}{Data source}  & Sample  \\
\cline{2-5}
		Pattern   & \hspace{0.15cm} CSF (3) \hspace{0.15cm}& \hspace{0.15cm}PET (10) \hspace{0.15cm}& \hspace{0.15cm}MRI (10)\hspace{0.15cm} & \hspace{0.15cm}Gene (10)\hspace{0.15cm} & Size \\
		\hline
		1   & $\surd$   & $\surd$   & $\surd$   & $\surd$   & 413   \\
		2   & $\surd$   & $\surd$   & $\surd$   &     & 367  \\
		3   & $\surd$   & $\surd$   &    &    $\surd$   & 34   \\
		4   &    & $\surd$   &   $\surd$  & $\surd$   & 109  \\
		5   &    & $\surd$   &     & $\surd$     & 81  \\
		6   &   & $\surd$   & $\surd$     &       & 54   \\
		7   &    &     &    &     $\surd$    & 51 \\
		8   &   &     &  $\surd$    &     & 58  \\
\hline
\multicolumn{5}{r}{Total} & 1167\\
		\bottomrule
			\end{tabular}%
\end{small}
\end{center}
\end{table}

The original ADNI data is available at {\it http://adni.loni.usc.edu}. The number of variables from the last three sources in the original data are larger. We use feature screen methods (Fan and Lv, 2008) to screen out the most 10 important variables for each source for our experiment.

In all experiments, the depth of generator, discriminator and predictor in FragmGAN, GAIN and Auto-Encoder is set to be 3. The number of hidden nodes in each layer for generator and discriminator is $2d$, $d$ and $d$, respectively. The number of hidden nodes in each layer for predictor is $d$, $d/2$ and $1$, respectively. The activation function is {\it ReLu} except for the output layer that uses {\it sigmoid}. The training batch sizes $k_G$, $k_D$ and $k_P$ are all 64. The $\alpha$ in $\mathcal{L}_M$ is 10. For the cross-validation of FragmGAN, we search the value of $\gamma$ on the grid of $\{0.40, 0.41,\cdots, 0.59, 0.60\}$.

We use {\it Pytorch} to implement FragmGAN, GAIN, Auto-Encoder and MisGAN. The code of FragmGAN is available at the Supplementary Material. We use {\it Python} to implement:

MICE (package `fancyimpute', {\it https://github.com/iskandr/fancyimpute}),

MissForest (package `missingpy', {\it	https://github.com/softmechanics/missingpy}),

EM (package `impyute', {\it https://github.com/eltonlaw/impyute}),

and Matrix ({\it https://www.cnblogs.com/wuliytTaotao/p/10814770.html}).

We use $R$ to implement Model Averaging and FR-FI.

\vspace{0.4cm}

{\bf References}

Fan, J. and Lv, J. Sure independence screening for ultrahigh
dimensional feature space (with disccusions). {\it Journal of
Royal Statistical Society, Series B}, 70(5):849--911, 2008.


\end{document}